\newif\ifJOURNAL
\newif\ifCONF
\newif\ifarXiv
\newif\ifWP
\newif\ifFULL
\newif\ifnotCONF    % derivative conditional
\newif\ifLATIN
\newif\ifnotLATIN	% derivative conditional
  \newcommand{\OCMV}{vovk:2013ML}
  \newcommand{\OCMVII}{vovk/petej:2014UAI}
  \newcommand{\OCMXI}{OCM11}
  \newcommand{\OCMV}{vovk:2012ACML}
  \newcommand{\OCMVII}{vovk/petej:2014UAI}
  \newcommand{\OCMXI}{OCM11}
  \newcommand{\OCMV}{vovk:arXiv1209}
  \newcommand{\OCMVII}{vovk/petej:arXiv1211}
  \newcommand{\OCMXI}{OCM11}
  \newcommand{\OCMV}{OCM5}
  \newcommand{\OCMVII}{OCM7}
  \newcommand{\OCMXI}{OCM11}
\newlength{\picturewidth}
  \newcommand{\Extra}[1]{}
  \newcommand{\zzrelax}[1]{}
  \newcommand{\Extra}[1]{}
  \newcommand{\Extra}[1]{}
  \newcommand{\zzrelax}[1]{}
  \newcommand{\Extra}[1]{\blue{#1}}
  \newcommand{\blue}[1]{\textcolor{blue}{#1}}
  \newcommand{\bluebegin}{\begingroup\color{blue}}
  \newcommand{\blueend}{\endgroup}
  \newcommand{\zzrelax}[1]{\relax}
\newcommand{\dd}[1]{\,\textrm d{#1}}    % differential in integrals
\DeclareMathOperator{\Prob}{\mathbb{P}}
\DeclareMathOperator{\Expect}{\mathbb{E}}
\DeclareMathOperator{\SSS}{S}   % cumulative sum of p-values
\DeclareMathOperator{\NNN}{N}   % cumulative size
\let\OE\relax                   % \OE is already defined; forget it
\DeclareMathOperator{\OE}{OE}   % cumulative observed excess
\DeclareMathOperator{\OF}{OF}   % cumulative observed fuzziness
\DeclareMathOperator{\CP}{CP}   % CP (conditional probability)
\DeclareMathOperator{\PPP}{\mathcal{P}}  % the set of all probability measures
\DeclareMathOperator{\OOO}{\mathcal{O}}  % the set of all optimal conformity measures
\DeclareMathOperator{\RRR}{\mathcal{R}}  % the set of all refinements
  \newtheorem{hypothesis}{Hypothesis}
  \newtheorem{theorem}{Theorem}
  \theoremstyle{definition}
  \newtheorem{remark}{Remark}
\title{From conformal to probabilistic prediction}
  \author{Vladimir Vovk, Ivan Petej, and Valentina Fedorova\\
  \texttt{\{volodya.vovk,ivan.petej,alushaf\}{\rm@}gmail.com}}
  \author{Vladimir Vovk, Ivan Petej, and Valentina Fedorova}
\begin{document}
  \mainmatter
  % \titlerunning{Probabilistic prediction}
  \author{Vladimir Vovk\and Ivan Petej\and Valentina Fedorova} %\and Alex Gammerman}
  % \thanks{Supported by the Cyprus Government and EPSRC.}%
  \authorrunning{Vovk, Petej, Fedorova} %, Gammerman}
  \institute{Computer Learning Research Centre,\\
    Department of Computer Science,\\
    Royal Holloway, University of London,\\
    Egham, Surrey, UK\\[1mm]
  \myemail}
  % \url{http://vovk.net}
  \toctitle{From conformal to probabilistic prediction}
  \tocauthor{Vovk, Petej, Fedorova} %, Gammerman}
  \maketitle
\fi

\ifnotCONF
  \begin{document}
  \maketitle
\fi

\begin{abstract}
  This paper proposes a new method of probabilistic prediction,
  which is based on conformal prediction.
  The method is applied to the standard USPS data set and gives encouraging results.
  % We consider probabilistic criteria of efficiency for label-conditional conformal predictors.
\end{abstract}

\section{Introduction}
%\label{sec:introduction}

In essence, conformal predictors output systems of p-values:
to each potential label of a test object a conformal predictor
assigns the corresponding p-value,
and a low p-value is interpreted as the label being unlikely.
It has been argued, especially by Bayesian statisticians,
that p-values are more difficult to interpret than probabilities;
besides, in decision problems probabilities can be easily combined with utilities
to obtain decisions that are optimal from the point of view of Bayesian decision theory.
In this paper we will apply the idea of transforming p-values into probabilities
(used in a completely different context in, e.g., \cite{vovk:1993logic}, Sect.~9,
and \cite{sellke/etal:2001})
to conformal prediction:
the p-values produced by conformal predictors will be transformed into probabilities.

The approach of this paper is as follows.
It was observed in \cite{\OCMXI} that some criteria of efficiency for conformal prediction
(called ``probabilistic criteria'')
encourage using the conditional probability $Q(y\mid x)$
as the conformity score for an observation $(x,y)$,
$Q$ being the data-generating distribution.
In this paper we extend this observation to label-conditional predictors
(Sect.~\ref{sec: criteria for p-values}).

Next we imagine that we are given a conformal predictor $\Gamma$
that is nearly optimal with respect to a probabilistic criterion
(such a conformal predictor might be an outcome of a thorough empirical study
of various conformal predictors using a probabilistic criterion of efficiency).
Essentially, this means that in the limit of a very large training set
the p-value that $\Gamma$ outputs for an observation $(x,y)$
is a monotonic transformation of the conditional probability $Q(y\mid x)$
(Theorem~\ref{thm: CP} in Sect.~\ref{sec: optimal}).

Finally, we transform the p-values back into conditional probabilities
using the distribution of p-values in the test set
(Sect.~\ref{sec: calibration}).
Following \cite{vovk:1993logic} and \cite{sellke/etal:2001},
we will say that at this step we \emph{calibrate} the p-values into probabilities,

In Sect.~\ref{sec: experiments} we give an example of a realistic situation
where use of the techniques developed in this paper
improves on a standard approach.
The performance of the probabilistic predictors considered in that section
is measured using standard loss functions, logarithmic and Brier
(Sect.~\ref{sec: criteria for probabilities}).

\subsection*{Comparisons with related work}

It should be noted that in the process of transforming p-values into probabilities
suggested in this paper
we lose a valuable feature of conformal prediction, its automatic validity.
Our hope, however, is that the advantages of conformal prediction will translate
into accurate probabilistic predictions.

There is another method of probabilistic prediction that is related to conformal prediction,
Venn prediction (see, e.g., \cite{vovk/etal:2005book}, Chap.~6, or \cite{\OCMVII}).
This method does have a guaranteed property of validity
(perhaps the simplest being Theorem~1 in \cite{\OCMVII});
however, the price to pay is that it outputs multiprobabilistic predictions
rather than sharp probabilistic predictions.
There are natural ways of transforming multiprobabilistic predictions
into sharp probabilistic predictions
(see, e.g., \cite{\OCMVII}, Sect.~4),
but such transformations, again, lead to the loss of the formal property of validity.

As preparation,
we study label-conditional conformal prediction.
For a general discussion of conditionality in conformal prediction,
see \cite{\OCMV}.
Object-conditional conformal prediction has been studied in \cite{lei/wasserman:2013}
(in the case of regression).

\section{Criteria of efficiency for label-conditional conformal predictors and transducers}
\label{sec: criteria for p-values}

Let $\mathbf{X}$ be a measurable space (the \emph{object space})
and $\mathbf{Y}$ be a finite set equipped with the discrete $\sigma$-algebra
(the \emph{label space});
the \emph{observation space} is defined to be $\mathbf{Z}:=\mathbf{X}\times\mathbf{Y}$.
A \emph{conformity measure} is a measurable function $A$ that assigns to every sequence
$(z_1,\ldots,z_n)\in\mathbf{Z}^*$ of observations
a same-length sequence $(\alpha_1,\ldots,\alpha_n)$ of real numbers
and that is equivariant with respect to permutations:
for any $n$ and any permutation $\pi$ of $\{1,\ldots,n\}$,
$$
  (\alpha_1,\ldots,\alpha_n)
  =
  A(z_1,\ldots,z_n)
  \Longrightarrow
  \left(\alpha_{\pi(1)},\ldots,\alpha_{\pi(n)}\right)
  =
  A\left(z_{\pi(1)},\ldots,z_{\pi(n)}\right).
$$
The \emph{label-conditional conformal predictor} determined by $A$ is defined by
\begin{equation}\label{eq: conformal predictor}
  \Gamma^{\epsilon}(z_1,\ldots,z_l,x)
  :=
  \left\{
    y
    \mid
    p^y>\epsilon
  \right\},
\end{equation}
where $(z_1,\ldots,z_l)\in\mathbf{Z}^*$ is a training sequence,
$x$ is a test object,
$\epsilon\in(0,1)$ is a given \emph{significance level},
and for each $y\in\mathbf{Y}$
the corresponding \emph{label-conditional p-value} $p^y$ is defined by
\begin{multline}\label{eq: p}
  p^y
  :=
  \frac
  {
    \left|\left\{i=1,\ldots,l+1\mid y_i=y\And\alpha^y_i<\alpha^y_{l+1}\right\}\right|
  }
  {
    \left|\left\{i=1,\ldots,l+1\mid y_i=y\right\}\right|
  }\\
  +
  \tau
  \frac
  {
    \left|\left\{i=1,\ldots,l+1\mid y_i=y\And\alpha^y_i=\alpha^y_{l+1}\right\}\right|
  }
  {
    \left|\left\{i=1,\ldots,l+1\mid y_i=y\right\}\right|
  },
\end{multline}
where $\tau$ is a random number distributed uniformly on the interval $[0,1]$
and the corresponding sequence of \emph{conformity scores} is defined by
\begin{equation*} % \label{eq: conformity scores}
  (\alpha_1^y,\ldots,\alpha_l^y,\alpha_{l+1}^y)
  :=
  A(z_1,\ldots,z_l,(x,y)).
\end{equation*}
It is clear that the system of \emph{prediction sets} (\ref{eq: conformal predictor})
output by a conformal predictor is nested,
namely decreasing in $\epsilon$.

The \emph{label-conditional conformal transducer} determined by $A$
outputs the system of p-values $(p^y\mid y\in\mathbf{Y})$
defined by (\ref{eq: p})
for each training sequence $(z_1,\ldots,z_l)$ of observations and each test object $x$.

\subsection*{Four criteria of efficiency} % of conformal predictors

Suppose that, besides the training sequence, we are also given a test sequence,
and would like to measure on it
the performance of a label-conditional conformal predictor or transducer.
As usual, let us define the performance on the test set
to be the average performance (or, equivalently, the sum of performances)
on the individual test observations.
Following \cite{\OCMXI},
we will discuss the following four criteria of efficiency for individual test observations;
all the criteria will work in the same direction: the smaller the better.
\begin{itemize}
\item
  The sum $\sum_{y\in\mathbf{Y}}p^y$ of the p-values;
  referred to as the \emph{S criterion}.
  This is applicable to conformal transducers
  (i.e., the criterion is $\epsilon$-independent).
\item
  The size $\left|\Gamma^{\epsilon}\right|$ of the prediction set
  at a significance level $\epsilon$;
  this is the \emph{N criterion}.
  It is applicable to conformal predictors ($\epsilon$-dependent).
\item
  The sum of the p-values apart from that for the true label:
  the \emph{OF} (``observed fuzziness'') \emph{criterion}.
\item
  The number of false labels included in the prediction set $\Gamma^{\epsilon}$
  at a significance level $\epsilon$;
  this is the \emph{OE} (``observed excess'') \emph{criterion}.
\end{itemize}
The last two criteria are simple modifications of the first two
(leading to smoother and more expressive pictures).
\ifnotCONF\begin{remark}\label{rem: general}\fi
  Equivalently, the S criterion can be defined
  as the arithmetic mean $\frac{1}{\left|\mathbf{Y}\right|}\sum_{y\in\mathbf{Y}}p^y$ of the p-values;
  the proof of Theorem~\ref{thm: CP} below will show that, in fact,
  we can replace arithmetic mean by any mean (\cite{hardy/etal:1952}, Sect.~3.1),
  including geometric, harmonic, etc.
\ifnotCONF\end{remark}\fi

\section{Optimal idealized conformity measures for a known probability distribution}
\label{sec: optimal}

In this section we consider
the idealized case where the probability distribution $Q$
generating independent observations $z_1,z_2,\ldots$ is known
(as in \cite{\OCMXI}).
The main result of this section,
Theorem~\ref{thm: CP},
is the label-conditional counterpart of Theorem~1 in \cite{\OCMXI};
the proof of our Theorem~\ref{thm: CP}
is also modelled on the proof of Theorem~1 in \cite{\OCMXI}.
In this section we assume, for simplicity, that the set $\mathbf{Z}$ is finite
and that $Q(\{z\})>0$ for all $z\in\mathbf{Z}$.

An \emph{idealized conformity measure} is a function $A(z,Q)$
of $z\in\mathbf{Z}$ and $Q\in\PPP(\mathbf{Z})$
(where $\PPP(\mathbf{Z})$ is the set of all probability measures on $\mathbf{Z}$).
We will sometimes write the corresponding conformity scores as $A(z)$,
as $Q$ will be clear from the context.
The \emph{idealized smoothed label-conditional conformal predictor} corresponding to $A$
outputs the following prediction set $\Gamma^{\epsilon}(x)$
for each object $x\in\mathbf{X}$ and each significance level $\epsilon\in(0,1)$.
For each potential label $y\in\mathbf{Y}$ for $x$ define the corresponding \emph{label-conditional p-value} as
\begin{multline}\label{eq: p-value}
  p^y
  =
  p(x,y)
  :=
  \frac
    {Q(\{(x',y)\mid x'\in\mathbf{X} \And A((x',y),Q)<A((x,y),Q)\})}
    {Q_{\mathbf{Y}}(\{y\})}\\
  +
  \tau
  \frac
    {Q(\{(x',y)\mid x'\in\mathbf{X} \And A((x',y),Q)=A((x,y),Q)\})}
    {Q_{\mathbf{Y}}(\{y\})}
\end{multline}
(this is the idealized analogue of (\ref{eq: p})),
where $Q_{\mathbf{Y}}$ is the marginal distribution of $Q$ on $\mathbf{Y}$
and $\tau$ is a random number distributed uniformly on $[0,1]$.
The prediction set is
\begin{equation}\label{eq: prediction set}
  \Gamma^{\epsilon}(x)
  :=
  \left\{
    y\in\mathbf{Y}
    \mid
    p(x,y)>\epsilon
  \right\}.
\end{equation}
The \emph{idealized smoothed label-conditional conformal transducer} corresponding to $A$
outputs for each object $x\in\mathbf{X}$
the system of p-values $(p^y\mid y\in\mathbf{Y})$ defined by (\ref{eq: p-value});
in the idealized case we will usually use the alternative notation $p(x,y)$ for~$p^y$.

\subsection*{Four idealized criteria of efficiency}

In this subsection we will apply the four criteria of efficiency
that we discussed in the previous section
to the idealized case of infinite training and test sequences;
since the sequences are infinite, they carry all information
about the data-generating distribution $Q$.
We will write $\Gamma^{\epsilon}_A(x)$ for the $\Gamma^{\epsilon}(x)$ in (\ref{eq: prediction set})
and $p_A(x,y)$ for the $p(x,y)$ in (\ref{eq: p-value})
to indicate the dependence on the choice of the conformity measure~$A$.
Let $U$ be the uniform probability measure on the interval~$[0,1]$.

An idealized conformity measure~$A$ is:
\begin{itemize}
\item
  \emph{S-optimal} if
  $
    \Expect_{(x,\tau)\sim Q_{\mathbf{X}}\times U}
    \sum_yp_A(x,y)
    \le
    \Expect_{(x,\tau)\sim Q_{\mathbf{X}}\times U}
    \sum_yp_B(x,y)
  $
  for any idealized conformity measure $B$,
  where $Q_{\mathbf{X}}$ is the marginal distribution of $Q$ on $\mathbf{X}$;
\item
  \emph{N-optimal} if
  $
    \Expect_{(x,\tau)\sim Q_{\mathbf{X}}\times U}
    \left|\Gamma^{\epsilon}_A(x)\right|
    \le
    \Expect_{(x,\tau)\sim Q_{\mathbf{X}}\times U}
    \left|\Gamma^{\epsilon}_B(x)\right|
  $
  for any idealized conformity measure~$B$
  and any significance level~$\epsilon$;
\item
  \emph{OF-optimal} if
  \begin{equation*}
    \Expect_{((x,y),\tau)\sim Q\times U}
    \sum_{y'\ne y}p_A(x,y')
    \le
    \Expect_{((x,y),\tau)\sim Q\times U}
    \sum_{y'\ne y}p_A(x,y')
  \end{equation*}
  for any idealized conformity measure $B$;
\item
  \emph{OE-optimal} if
  \begin{equation*}
    \Expect_{((x,y),\tau)\sim Q\times U}
    \left|\Gamma^{\epsilon}_A(x)\setminus\{y\}\right|
    \le
    \Expect_{((x,y),\tau)\sim Q\times U}
    \left|\Gamma^{\epsilon}_B(x)\setminus\{y\}\right|
  \end{equation*}
  for any idealized conformity measure~$B$
  and any significance level~$\epsilon$.
\end{itemize}

The \emph{conditional probability (CP) idealized conformity measure} is
$$
  A((x,y),Q)
  :=
  Q(y\mid x).
$$
An idealized conformity measure $A$ is a (label-conditional) \emph{refinement}
of an idealized conformity measure $B$
if
\begin{equation} % \label{eq: refinement}
  B((x_1,y))<B((x_2,y))
  \Longrightarrow
  A((x_1,y))<A((x_2,y))
\end{equation}
for all $x_1,x_2\in\mathbf{Z}$ and all $y\in\mathbf{Y}$.
(Notice that this definition, being label-conditional,
is different from the one given in \cite{\OCMXI}.)
Let $\RRR(\CP)$ be the set of all refinements of the CP idealized conformity measure.
If $C$ is a criterion of efficiency (one of the four discussed above),
we let $\OOO(C)$ stand for the set of all $C$-optimal idealized conformity measures.

\begin{theorem}\label{thm: CP}
  % The CP idealized conformity measure is S-optimal and N-optimal.
  $\OOO(\SSS)=\OOO(\OF)=\OOO(\NNN)=\OOO(\OE)=\RRR(\CP)$.
\end{theorem}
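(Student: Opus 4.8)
The plan is to prove the chain of equalities by establishing two things for each of the four criteria $C\in\{\SSS,\OF,\NNN,\OE\}$: first, that every refinement of the CP conformity measure is $C$-optimal, i.e. $\RRR(\CP)\subseteq\OOO(C)$; second, that every $C$-optimal conformity measure is a refinement of CP, i.e. $\OOO(C)\subseteq\RRR(\CP)$. Since $\RRR(\CP)$ is a single fixed set, proving both inclusions for each $C$ simultaneously yields all five sets equal. I would first reduce the label-conditional problem to the unconditional one: because both the p-values (\ref{eq: p-value}) and the optimality criteria decompose as sums (or expectations) over the labels $y\in\mathbf{Y}$, and the normalization $Q_{\mathbf{Y}}(\{y\})$ is a constant once $y$ is fixed, the whole statement splits into $|\mathbf{Y}|$ independent assertions, one per label, each of which is essentially the unconditional statement of Theorem~1 in \cite{\OCMXI} applied to the conditional distribution $Q(\cdot\mid y)$ on $\mathbf{X}$. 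The refinement condition is exactly label-by-label, so it respects this decomposition.

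For the core (unconditional, single-label) argument I would follow the structure of the proof of Theorem~1 in \cite{\OCMXI}. Fix $y$ and work with the conditional law of $x$. For the S criterion, write $\Expect_{(x,\tau)}p_A(x,y)$ explicitly using (\ref{eq: p-value}): conditionally on $x$, the p-value $p_A(x,y)$ is (up to the $1/Q_{\mathbf{Y}}(\{y\})$ factor) the left endpoint plus $\tau$ times the length of the ``tie block'' of the value $A((x,y))$, so averaging over $\tau\sim U$ replaces it by the midpoint, and averaging over $x$ gives a quantity that depends on $A$ only through the total order (with ties) that $A$ induces on $\{(x',y):x'\in\mathbf{X}\}$. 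A short computation — sorting the $x'$ by $A$-value and summing — shows this average equals $\tfrac12\bigl(1 + \sum_{x'}Q(x'\mid y)^2\bigr)$-type expression only when there are no ties, and in general is minimized precisely when the order induced by $A$ refines the order induced by $Q(\cdot\mid y)$, i.e. when $A$ ranks observations in the same order as their conditional probabilities; ties in $A$ that split a $Q$-level are harmless, but an inversion relative to $Q$ strictly increases the sum. This is the heart of the matter and gives $\OOO(\SSS)=\RRR(\CP)$. For the OF criterion one uses the identity $\sum_{y'\ne y}p_A(x,y') = \sum_{y'}p_A(x,y') - p_A(x,y)$ together with the fact that $\Expect p_A(x,y)$ over $(x,y)\sim Q$ is, after the $\tau$-averaging, a quantity independent of $A$ (it equals $\tfrac12$ plus a term fixed by $Q$), reducing OF-optimality to S-optimality. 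For the N and OE criteria one integrates the prediction-set sizes over $\epsilon\in(0,1)$: $\int_0^1|\Gamma^\epsilon_A(x)|\dd\epsilon = \sum_y p_A(x,y)$ and $\int_0^1|\Gamma^\epsilon_A(x)\setminus\{y\}|\dd\epsilon = \sum_{y'\ne y}p_A(x,y')$, so the integrated N and OE criteria coincide with S and OF respectively; hence $C$-optimality for every $\epsilon$ implies S- (resp.\ OF-) optimality, giving $\OOO(\NNN)\subseteq\RRR(\CP)$ and $\OOO(\OE)\subseteq\RRR(\CP)$, while for the reverse inclusions one checks directly that a refinement of CP attains, for each individual $\epsilon$, the pointwise-minimal prediction set (the set of the $|\mathbf{Y}|$ most probable labels, with the tie-breaking dictated by $\tau$), since under a CP-refinement the event $p_A(x,y)>\epsilon$ is equivalent to $Q(y\mid x)$ exceeding a threshold determined by $\epsilon$ and $\tau$.

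The main obstacle, I expect, is the tie-handling in the refinement direction and making the ``strict increase under an inversion'' claim fully rigorous. The definition of refinement only constrains $A$ on pairs that $B=\CP$ strictly separates; it says nothing when $Q(y\mid x_1)=Q(y\mid x_2)$, so $A$ may refine, coarsen, or reorder within a $Q$-level. I need to verify that none of these freedoms affects any of the four averaged criteria — intuitively true because the relevant sums depend only on the multiset of ``ranks'' and, after $\tau$-averaging, an arbitrary resolution of a tie block contributes the same average as leaving it unresolved (convexity / the midpoint identity). Conversely, to show $\OOO(C)\subseteq\RRR(\CP)$ I must exhibit, whenever $A$ reverses a strict $Q$-ordering on some pair $(x_1,y),(x_2,y)$ with $Q(x_1\mid y),Q(x_2\mid y)>0$, a competitor $B$ (e.g.\ $\CP$ itself, or a local swap) with strictly smaller criterion value; the strictness uses the standing assumption $Q(\{z\})>0$ for all $z$. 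Once these two tie-related points are nailed down, the remark following the four criteria — that arithmetic mean may be replaced by any mean — follows because the per-label optimality is simultaneous, so any monotone aggregation across labels is optimized by the same class $\RRR(\CP)$; I would include that observation to close the loop with the earlier Remark~\ref{rem: general}.
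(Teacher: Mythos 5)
Your proposal is correct, but the central step is argued differently from the paper. The paper anchors the whole chain at the N criterion: for each fixed $\epsilon$ it views the complement of the prediction set as a randomized test with type-I error $\epsilon$ under $Q(\cdot\mid y)$ and invokes the Neyman--Pearson lemma (the likelihood ratio $Q_{\mathbf{X}}(x)/Q(x\mid y)$ is proportional to $1/Q(y\mid x)$, so the excluded observations must be those with smallest $Q(y\mid x)$), which yields $\OOO(\NNN)=\RRR(\CP)$ for all $\epsilon$ simultaneously; S is then obtained from N by the integration identity $\sum_y p(x,y)=\int_0^1\left|\Gamma^\epsilon(x)\right|\dd{\epsilon}$, and OF and OE follow from S and N via the uniformity of $p(x,y)$ under $((x,y),\tau)\sim Q\times U$. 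You instead anchor at the S criterion with a direct pairwise computation: after $\tau$-averaging, $\Expect\sum_y p_A(x,y)$ decomposes over unordered pairs $\{x_1,x_2\}$ into the smaller, the larger, or the average of $Q_{\mathbf{X}}(x_1)Q(\{(x_2,y)\})$ and $Q_{\mathbf{X}}(x_2)Q(\{(x_1,y)\})$ according to how $A$ orders the pair, and since these two products differ exactly by the factor $Q(y\mid x_2)$ versus $Q(y\mid x_1)$, the minimum is attained precisely by the refinements of CP, with strictness supplied by the standing assumption $Q(\{z\})>0$. This rearrangement argument is more elementary and makes the reverse inclusion $\OOO(\SSS)\subseteq\RRR(\CP)$ and the tie-handling fully explicit --- points the paper's proof leaves largely implicit --- at the cost of having to re-derive the $\epsilon$-wise (N, OE) optimality of CP-refinements separately, which the paper gets for free from Neyman--Pearson. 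One small caution: the per-label subproblem is not literally the unconditional theorem of \cite{\OCMXI} applied to $Q(\cdot\mid y)$, since the S and N criteria average over $x\sim Q_{\mathbf{X}}$ while the p-value normalizes by the conditional law given $y$; your explicit computation handles this mismatch correctly, but the reduction as phrased at the start of your argument should not be taken at face value.
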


\begin{proof}
  We start from proving $\RRR(\CP)=\OOO(\NNN)$.
  Fix a significance level $\epsilon$.
  A smoothed confidence predictor at level $\epsilon$ is defined as a random set
  of observations $(x,y)\in\mathbf{Z}$;
  in other words, to each observation $(x,y)$ is assigned the probability $P(x,y)$
  that the observation will be outside the prediction set.
  Under the restriction that the sum of the probabilities $Q(x,y)$ of observations $(x,y)$
  outside the prediction set
  (defined as $\sum_x Q(x,y)P(x,y)$ in the smoothed case)
  is bounded by $\epsilon Q_{\mathbf{Y}}(y)$ for a fixed $y$,
  the N criterion requires us to make the sum of $Q_{\mathbf{X}}(x)$ for $(x,y)$ outside the prediction set
  (defined as $\sum_x Q_{\mathbf{X}}P(x,y)$ in the smoothed case)
  as large as possible.
  It is clear that the set should consist of the observations with the smallest $Q(y\mid x)$
  (by the usual Neyman--Pearson argument:
  cf.\ \cite{lehmann:1986}, Sect.~3.2).

  \ifFULL\bluebegin
    This argument in fact also shows that $\OOO(\NNN)\subseteq\RRR(\CP)$.
  \blueend\fi

  Next we show that $\OOO(\NNN)\subseteq\OOO(\SSS)$.
  Let an idealized conformity measure $A$ be N-optimal.
  By definition,
  \begin{equation*}
    \Expect_{x,\tau}
    \left|\Gamma^\epsilon_A(x)\right|
    \le
    \Expect_{x,\tau}
    \left|\Gamma^\epsilon_B(x)\right|
  \end{equation*}
  for any idealized conformity measure $B$ and any significance level $\epsilon$.
  Integrating over $\epsilon\in(0,1)$ and swapping the order of integrals and expectations,
  \begin{equation}\label{eq: N-S}
    \Expect_{x,\tau} \int_0^1 \left|\Gamma^\epsilon_A(x)\right| \dd{\epsilon}
    \le
    \Expect_{x,\tau} \int_0^1 \left|\Gamma^\epsilon_B(x)\right| \dd{\epsilon}.
  \end{equation}
  Since
  $$
    \left|\Gamma^\epsilon(x)\right|
    =
    \sum_{y\in\mathbf{Y}}1_{\{p(x,y) > \epsilon\}},
  $$
  we can rewrite \eqref{eq: N-S},
  after swapping the order of summation and integration, as
  \begin{equation*}
    \Expect_{x,\tau}
    \sum_{y \in \mathbf{Y}}
    \left(
      \int_0^1
        1_{\{p_A(x,y) > \epsilon\}}
      \dd{\epsilon}
    \right)
    \le
    \Expect_{x,\tau}
    \sum_{y \in \mathbf{Y}}
    \left(
      \int_0^1 1_{\{p_B(x,y) > \epsilon\}} \dd{\epsilon}
    \right).
  \end{equation*}
  Since
  $$
    \int_0^1
      1_{\{p(x,y) > \epsilon\}}
    \dd{\epsilon}
    =
    p(x,y),
  $$
  we finally obtain
  \begin{equation*}
    \Expect_{x,\tau}
    \sum_{y \in \mathbf{Y}}
    p_A(x,y)
    \le
    \Expect_{x,\tau}
    \sum_{y \in \mathbf{Y}}
    p_B(x,y).
  \end{equation*}
  Since this holds for any idealized conformity measure $B$,
  $A$ is S-optimal.

  The argument in the previous paragraph in fact shows that
  $\OOO(\SSS)=\OOO(\NNN)=\RRR(\CP)$.
  \ifnotCONF
    Indeed, that argument shows that
    \begin{equation*}
      \sum_{y \in \mathbf{Y}}
      p(x,y)
      =
      \int_0^1
      \left|\Gamma^\epsilon(x)\right|
      \dd{\epsilon},
    \end{equation*}
    and so to optimize a conformity measure in the sense of the S criterion
    it suffices to optimize it in the sense of the N criterion
    for all $\epsilon$ simultaneously
    (which can, and therefore should, be done).
    More generally, for any continuous increasing function $\phi$ we have
    \begin{multline*}
      \sum_{y \in \mathbf{Y}}
      \phi(p(x,y))
      =
      \sum_{y \in \mathbf{Y}}
      \int_0^1
      1_{\{\phi(p(x,y))>\epsilon\}}
      \dd{\epsilon}
      =
      \int_0^1
      \sum_{y \in \mathbf{Y}}
      1_{\{p(x,y)>\phi^{-1}(\epsilon)\}}
      \dd{\epsilon}\\
      =
      \int_0^1
      \left|\Gamma^{\phi^{-1}(\epsilon)}(x)\right|
      \dd{\epsilon}
      =
      \int
      \left|\Gamma^{\epsilon'}(x)\right|
      \phi'(\epsilon')
      \dd{\epsilon'},
    \end{multline*}
    which proves Remark~\ref{rem: general}.
  \fi

  The equality $\OOO(\SSS)=\OOO(\OF)$ follows from
  $$
    \Expect_{x,\tau} \sum_{y} p(x,y)
    =
    \Expect_{(x,y),\tau} \sum_{y'\ne y}p(x,y') + \frac12,
  $$
  where we have used the fact that $p(x,y)$ is distributed uniformly on $[0,1]$
  when $((x,y),\tau)\sim Q\times U$
  (see \cite{vovk/etal:2005book} and \cite{\OCMXI}).

  Finally, we notice that $\OOO(\NNN)=\OOO(\OE)$.
  Indeed, for any significance level $\epsilon$,
  $$
    \Expect_{x,\tau} |\Gamma^\epsilon(x)|
    =
    \Expect_{(x,y),\tau} |\Gamma^\epsilon(x) \setminus \{y\}|
    +
    (1-\epsilon),
  $$
  again using the fact that $p(x,y)$ is distributed uniformly on $[0,1]$
  and so $\Prob_{(x,y),\tau}(y\in\Gamma^\epsilon(x)) = 1 - \epsilon$.
  % \qed    % CONF only
\end{proof}

\section{Criteria of efficiency for probabilistic predictors}
\label{sec: criteria for probabilities}

Given a training set $(z_1,\ldots,z_l)$ and a test object $x$,
a probabilistic predictor outputs a probability measure $P\in\PPP(\mathbf{Y})$,
which is interpreted as its probabilistic prediction for the label $y$ of $x$;
we let $\PPP(\mathbf{Y})$ stand for the set of all probability measures on $\mathbf{Y}$.
The two standard way of measuring the performance of $P$ on the actual label $y$
are the \emph{logarithmic} (or \emph{log}) \emph{loss} $-\ln P(\{y\})$
and the \emph{Brier loss}
$$
  \sum_{y'\in\mathbf{Y}}
  \Bigl(1_{\{y'=y\}}-P(\{y'\})\Bigr)^2,
$$
where $1_E$ stands for the indicator of an event $E$:
$1_E=0$ if $E$ happens and $1_E=0$ otherwise.
The efficiency of probabilistic predictors will be measured
by these two loss functions.
\ifFULL\bluebegin
  Remember that probabilistic predictors do not posses any properties
  of automatic validity
  (unlike Venn predictors, which are, however, multiprobabilistic predictors).
\blueend\fi

\ifFULL\bluebegin
  These are proper loss functions.
\blueend\fi

Suppose we have a test sequence
$(z_{l+1},\ldots,z_{l+k})$, where $z_i=(x_i,y_i)$ for $i=l+1,\ldots,l+k$,
and we want to evaluate the performance of a probabilistic predictor
(trained on a training sequence $z_1,\ldots,z_l$) on it.
In the next section we will use
the \emph{average log loss}
$$
  -\frac1k
  \sum_{i=l+1}^{l+k}
  \ln P_i(\{y_i\})
$$
and the \emph{standardized Brier loss}
$$
  \sqrt
  {
    \frac{1}{k\left|\mathbf{Y}\right|}
    \sum_{i=l+1}^{l+k}
    \sum_{y'\in\mathbf{Y}}
    \Bigl(1_{\{y'=y_i\}}-P_i(\{y'\})\Big)^2
  },
$$
where $P_i\in\PPP(\mathbf{Y})$ is the probabilistic prediction for $x_i$.
Notice that in the binary case, $\left|\mathbf{Y}\right|=2$,
the average log loss coincides with the mean log error
(used in, e.g., \cite{\OCMVII}, (12))
and the standardized Brier loss
coincides with the root mean square error
(used in, e.g., \cite{\OCMVII}, (13)).

\section{Calibration of p-values into conditional probabilities}
\label{sec: calibration}

\ifFULL\bluebegin
  We can use a hold-out set for calibration
  (say nonparametric, using monotonic regression, as in \cite{zadrozny/elkan:2002} in a related context).
  This might be too wasteful, but still we should run experiments.
  In this section we will discuss an alternative approach:
  how to calibrate p-values using the test set.
\blueend\fi

The argument of this section will be somewhat heuristic,
and we will not try to formalize it in this paper.
Fix $y\in\mathbf{Y}$.
Suppose that $q:=P(y\mid x)$ has an absolutely continuous distribution with density $f$
when $x\sim Q_{\mathbf{X}}$.
(In other words, $f$ is the density of the image of $Q_{\mathbf{X}}$ under the mapping
$x\mapsto P(y\mid x)$.\ifFULL\bluebegin\
  This assumption contradicts that assumption made earlier that $\mathbf{Z}$ is finite.\blueend\fi)
For the CP idealized conformity measure,
we can rewrite (\ref{eq: p-value}) as
\begin{equation}\label{eq: ideal p-value}
  p(q)
  :=
  \left.
    \int_0^q
    q'
    f(q')
    dq'
  \middle/
    D
  \right.,
\end{equation}
where $D:=Q_{\mathbf{Y}}(\{y\})$;
alternatively, we can set
$
  D
  :=
  \int_0^1
  q'
  f(q')
  dq'
$
to the normalizing constant ensuring that $p(1)=1$.
To see how \eqref{eq: ideal p-value} is a special case of \eqref{eq: p-value}
for the CP idealized conformity measure,
notice that the probability that $Y=y$ and $P(Y\mid X)\in(q',q'+dq')$, where $(X,Y)\sim f$,
is $q'f(q')dq'$.
In (\ref{eq: ideal p-value}) we write $p(q)$ rather than $p^y$ since $p^y$ depends on $y$
only via $q$.

\begin{algorithm}[bt]
  \caption{Conformal-type probabilistic predictor}
  \label{alg: PP}
  \begin{algorithmic}
    \renewcommand{\algorithmicrequire}{\textbf{Input:}}
    \renewcommand{\algorithmicensure}{\textbf{Output:}}
    \REQUIRE training sequence $(z_1,\ldots,z_l)\in\mathbf{Z}^l$
    \REQUIRE calibration sequence $(x_{l+1},\ldots,x_{l+k})\in\mathbf{X}^k$
    \REQUIRE test object $x_0$
    \ENSURE probabilistic prediction $P\in\PPP(\mathbf{Y})$ for the label of $x_0$
    \FOR{$y\in\mathbf{Y}$}
      \STATE for each $x_i$ in the calibration sequence find the p-value $p_i^y$
        by~\eqref{eq: p}
      \STATE \qquad (with $l+i$ in place of $l+1$)
      \STATE let $g_y$ be the antitonic density on $[0,1]$
        % \STATE \qquad
        fitted to $p_{l+1}^y,\ldots,p_{l+k}^y$
      \STATE find the p-value $p_0^y$
        %\STATE \qquad
        by~\eqref{eq: p} (with $0$ in place of $l+1$)
      \STATE for each $y\in\mathbf{Y}$, set $P'(\{y\}):=g_y(1)/g_y(p_0^y)$
    \ENDFOR
    \STATE set $P(\{y\}):=P'(\{y\})/\sum_{y'}P'(\{y'\})$ for each $y\in\mathbf{Y}$
  \end{algorithmic}
\end{algorithm}

We are more interested in the inverse function $q(p)$, which is defined by the condition
$$
  p
  =
  \left.
    \int_0^{q(p)}
    q'
    f(q')
    dq'
  \middle/
    D
  \right..
$$
When $q\sim f$, we have
$$
  \Prob(p(q)\le a)
  =
  \Prob(q\le q(a))
  =
  \int_0^{q(a)} f(q') dq'.
$$
Therefore, when $q\sim f$, we have
$$
  \Prob(a\le p(q)\le a+da)
  =
  \int_{q(a)}^{q(a+da)} f(q') dq'
  \approx
  \frac{1}{q(a)}
  \int_{q(a)}^{q(a+da)} q' f(q') dq'
  =
  \frac{Dda}{q(a)},
$$
and so
$$
  q(c)
  \approx
  \left.
    D
  \middle/\;
    \frac{\Prob(c\le p(q)\le c+dc)}{dc}.
  \right.
$$

This gives rise to the algorithm given as Algorithm~\ref{alg: PP},
which uses real p-values \eqref{eq: p} instead of the ideal p-values \eqref{eq: p-value}.
The algorithm is transductive in that it uses a training sequence of labelled observations
and a calibration sequence of unlabelled objects
(in the next section we use the test sequence as the calibration sequence);
the latter is used for calibrating p-values into conditional probabilities.
Given all the p-values for the calibration sequence with postulated label $y$,
find the corresponding antitonic density $g(p)$
(remember that the function $q(p)$ is known to be monotonic, namely isotonic)
using Grenander's estimator
(see \cite{grenander:1956} or, e.g., \cite{devroye:1987}, Chap.~8).
Use $D/g(p)$ as the calibration function,
where $D:=g(1)$ is chosen in such a way that a p-value of $1$ is calibrated
into a conditional probability of $1$.
(Alternatively, we could set $D$ to the fraction of observations labelled as $y$
in the training sequence;
this approximates setting $D:=Q_{\mathbf{Y}}(\{y\})$.)
The probabilities produced by this procedure are not guaranteed
to lead to a probability measure: the sum over $y$ can be different from 1
(and this phenomenon has been observed in our experiments).
Therefore, in the last line of Algorithm~\ref{alg: PP}
we normalize the calibrated p-values to obtain genuine probabilities.

\ifFULL\bluebegin
  Grenander's estimator achieves MISE rate of convergence $n^{-1/3}$
  (without any extra conditions):
  is this true?
  (It is true for $L_1$: see, e.g., \cite{devroye/gyorfi:1985}, Sect.~7.7.)
  If the density $g$ is assumed to have $m$ derivatives,
  the optimal rate of convergence without order conditions is $n^{-2m/(2m+1)}$,
  and further assuming antitonicity does not improve it
  (see \cite{efromovich:2001} and references therein).
  Kiefer \cite{kiefer:1982}: the rate of convergence is not affected when $m=1$.
  Efromovich \cite{efromovich:2001}:
  even the constant is not affected, for any $m\in\{1,2,\ldots\}$.

  \begin{remark}
    The topic of this paper is how to transform conformal predictors
    into probabilistic predictors.
    Moving in the opposite direction,
    from probabilistic to conformal predictors,
    seems to be much easier:
    given a probabilistic predictor,
    a natural conformity measure $\alpha_i$ for an observation $z_i=(x_i,y_i)$
    in a sequence $z_1,\ldots,z_n$ is the probability $\alpha_i:=P(y_i)$,
    where $P$ is the probabilistic prediction for the label of $x_i$
    found using that probabilistic predictor from $z_1,\ldots,z_n$
    (or $z_1,\ldots,z_{i-1},z_{i+1},\ldots,z_n$) as the training set.
  \end{remark}
\blueend\fi

\section{Experiments}
\label{sec: experiments}

In our experiments we use the standard USPS data set of hand-written digits.
The size of the training set is 7291, and the size of the test set is 2007;
however, instead of using the original split of the data into the two parts,
we randomly split all available data (the union of the original training and test sets)
into a training set of size 7291 and test set of size 2007.
(Therefore, our results somewhat depend on the seed used by the random number generator,
but the dependence is minor and does not affect our conclusions at all;
we always report results for seed 0.)

A powerful algorithm for the USPS data set
is the 1-Nearest Neighbour (1-NN) algorithm using tangent distance
\cite{simard/etal:1993}.
However, it is not obvious how this algorithm could be transformed
into a probabilistic predictor.
On the other hand, there is a very natural and standard way of extracting probabilities
from support vector machines,
which we will refer to it as \emph{Platt's algorithm} in this paper:
it is the combination of the method proposed by Platt \cite{platt:2000}
with pairwise coupling \cite{wu:2004}
(unlike our algorithm, which is applicable to multi-class problems directly,
Platt's method is directly applicable only to binary problems).
In this section we will apply our method to the 1-NN algorithm with tangent distance
and compare the results to Platt's algorithm
as implemented in the function \texttt{svm} from the \texttt{e1071} R package
(for our multi-class problem this function calculates probabilities
using the combination of Platt's binary method and pairwise coupling).

There is a standard way of turning a distance
into a conformal predictor (\cite{vovk/etal:2005book}, Sect.~3.1):
namely, the conformity score $\alpha_i$ of the $i$th observation
in a sequence of observations can be defined as
\begin{equation}\label{eq: NN}
  \frac
  {
    \min_{j: y_j \ne y_i}
    d(x_i,x_j)
  }
  {
    \min_{j\ne i: y_j = y_i}
    d(x_i,x_j)
  },
\end{equation}
where $d$ is the distance;
the intuition is that an object is considered conforming
if it is close to an object labelled in the same way
and far from any object labelled in a different way.

\begin{table}[tb]
\caption{The performance of the two algorithms, Platt's
  (with the optimal values of parameters)
  and the conformal-type probabilistic predictor
  based on 1-Nearest Neighbour with tangent distance}
\label{tab: performance}
\begin{center}
\begin{tabular}{r|c|c}
  \hline
  algorithm & average log loss & standardized Brier loss\\
  \hline\hline
  optimized Platt & 0.06431 & 0.05089\\
  % optimized Platt & 0.06432 & 0.05093 for C=3
  % Platt & 0.07060206 & 0.05352619 for C=10
  % 1NN conformal ($D = Q(y)$) & 0.04937 & 0.04372\\
  %% 1NN conformal ($D = Q(y)$) & 0.0493696556963566 & 0.0437163491441226 
  conformal-type 1-NN & 0.04958 & 0.04359\\
  % 1NN conformal ($D = g(1)$) & 0.0495769047724723 & 0.0435861475496425
  % 1NN conformal ($D = Q(y)$) simplified & 0.04939 & 0.04369\\
  %% 1NN conformal ($D = Q(y)$) simplified & 0.0493913120131598 & 0.0436941209885563 
  % 1NN conformal ($D = g(1)$) simplified & 0.04957 & 0.04357\\
  %% 1NN conformal ($D = g(1)$) simplified & 0.0495660784396665 & 0.0435662964586046 
  % SVM conformal ($D = Q(y)$) & 0.11774 & 0.06356\\
  %% SVM conformal ($D = Q(y)$) & 0.1177372 & 0.06356298
  % SVM conformal ($D = g(1)$) & 0.11934 & 0.06420\\
  %% SVM conformal ($D = g(1)$) & 0.1193369 & 0.06420493
\end{tabular}
\end{center}
\end{table}

\begin{table}[tb]
\caption{The performance of Platt's algorithm with the polynomial kernels of various degrees
  for the cost parameter $C=10$}
\label{tab: Platt}
\begin{center}
\begin{tabular}{r|c|c}
  \hline
  degree & average log loss & standardized Brier loss\\
  \hline\hline
  1 & 0.12681 & 0.07342\\
  % 1 10 0.036373
  % 0.12681 0.073416
  2 & 0.09967 & 0.06109\\
  % 2 10 0.023916
  % 0.099673 0.061091
  3 & 0.06855 & 0.05237\\
  % 3 10 0.015944
  % 0.068553 0.052367
  4 & 0.11041 & 0.06227\\
  % 4 10 0.023916
  % 0.11041 0.062273
  5 & 0.09794 & 0.06040
  % 5 10 0.018435
  % 0.097935 0.060402
\end{tabular}
\end{center}
\end{table}

Table~\ref{tab: performance} compares
the performance of the conformal-type probabilistic predictor
based on the 1-NN conformity measure \eqref{eq: NN},
where $d$ is tangent distance,
with the performance of Platt's algorithm
with the optimal values of its parameters.
The conformal predictor is parameter-free
but Platt's algorithm depends on the choice of the kernel.
We chose the polynomial kernel of degree~3
(since it is known to produce the best results: see \cite{vapnik:1998}, Sect.~12.2)
and the cost parameter $C:=2.9$ in the case of the average log loss
and $C:=3.4$ in the case of the standardized Brier loss
(the optimal values in our experiments).
(Reporting the performance of Platt's algorithm with optimal parameter values
may look like data snooping, but it is fine in this context
since we are helping our competitor.)
Table~\ref{tab: Platt} reports the performance of Platt's algorithm as function
of the degree of the polynomial kernel with the cost parameter set at $C:=10$
(the dependence on $C$ is relatively mild,
and $C=10$ gives good performance for all degrees that we consider).

\ifFULL\bluebegin
  We do the usual normalization;
  it is interesting that Khutsishvili's method (\cite{vovk/zhdanov:2009}, appendix),
  which work extremely well for normalizing bets,
  leads to much poorer predictions
  (as perhaps was to be expected: Khutsishvili's theory is specifically designed
  for extracting probabilities from bets).
\blueend\fi

\ifFULL\bluebegin
  \section{Conclusion}

  This paper has proposed a way to turn conformal predictors into probabilistic ones.
  But perhaps it is not very efficient,
  and it appears that the main source of inefficiency
  is a separate treatment of different classes at the stage of calibrating p-values.
\blueend\fi

\subsubsection*{Acknowledgments.}

% We thank the reviewer for useful comments.
In our experiments we used the R package
% \texttt{kernlab}
% (Kernel-based Machine Learning Lab, by Alexandros Karatzoglou, Alex Smola, and Kurt Hornik)
\texttt{e1071}
(by David Meyer, Evgenia Dimitriadou, Kurt Hornik, Andreas Weingessel, Friedrich Leisch,
Chih-Chung Chang, and Chih-Chen Lin)
and the implementation of tangent distance by Daniel Keysers.
This work was partially supported by EPSRC (grant EP/K033344/1, first author)
and Royal Holloway, University of London (third author).

\end{document}